\documentclass[11pt,a4paper]{article}

\usepackage[utf8]{inputenc}
\usepackage[english]{babel}
\usepackage{amsmath, amsfonts, amssymb, amsthm}
\usepackage{graphicx}
\usepackage{geometry}
\usepackage{hyperref}
\usepackage{booktabs}
\usepackage{cite}

\newtheorem{lemma}{Lemma}

\title{How Many Training Samples Are Needed for the Inverse Kinematics Solutions by Artificial Neural Networks}

\author{
    Dong-Won Lim\thanks{ORCID: 0000-0002-3086-1237} \\
    The University of Suwon, Hwaseong 18323, Republic of Korea \\
    \texttt{dwlim@suwon.ac.kr}
}

\date{}

\begin{document}

\maketitle

\begin{abstract}
Inverse Kinematics (IK) plays a critical role in robotic motion planning and control. The IK solutions of a robot manipulator could be done by conventional ways such as geometric, algebraic, or Jacobian methods, which have drawbacks. The Artificial Neural Networks (ANNs) have become a promising alternative for approximating IK solutions due to their generalization ability and computational efficiency. This approach basically trains only a few samples of the end effector that are recorded for the solution of the IK problem. However, a fundamental question remains: how many training samples are sufficient to achieve reliable and accurate IK predictions? This study investigates the mathematical framework of relating the size of training datasets and the accuracy of ANN-based IK solvers. Using an articulated robotic manipulator, we generate varying amounts of joint–position pairs to train feedforward neural networks and assess their accuracy, convergence, and generalization capability. The results reveal more training samples than 125 did not contribute to the improvement of the model efficiency that the comparable measure dealing with the approximation accuracy over the sampling size, offering valuable insight into data efficiency. This work provides practical guidance for optimizing the data sizing of ANN solutions, balancing computational cost and model accuracy for real-world robotic applications.
\end{abstract}

\textbf{Keywords:} Artificial Neural Networks, Command Tracking, Inverse Kinematics, Number of Training Samples, Robot Manipulator

\section{Introduction}
In the field of function approximation using artificial neural networks (ANNs), the selection and quantity of training samples play an important role in determin-ing the accuracy and generalization capability of the learned model. As universal function approximators, ANNs are inherently data-driven. The ability of these networks to capture the underlying nonlinear relationships within a system is fundamentally limited by the quality, distribution and density of the training dataset. The training samples have been shown to influence two key aspects of ANNs during the optimization process. First, they impact the convergence behavior of the ANNs. Second, they affect the approximation error, which is the discrepancy between the true function and the model's prediction. Insufficient or poorly distributed training data can lead to high local errors in regions of the input space that are underrepresented. This can cause the model to extrapolate poorly or overfit to noisy patterns. In contrast, an excessively large dataset may result in unnecessary computational expenditure and may also exhibit redundancy or imbalance if it is not meticulously designed. It is imperative to comprehend the interrelation between the complexity of training samples and the occurrence of approximation error if efficient and accurate ANN models are to be developed.

Similar concerns about data quantity and efficiency arise in broader artificial intelligence research, particularly in the development of large language models (LLMs). Over the years, LLMs have been trained with exponentially increasing datasets, reaching 300 billion tokens in GPT-3 [1], 1.4 trillion tokens in Chinchilla [2], 13 trillion tokens in GPT-4 [3], and over 15 trillion tokens in Llama 3 [4]. However, researchers anticipate that this trend will plateau due to the finite availability of fresh textual data. As pointed out in [5], AI research is shifting toward optimizing smaller models with enriched synthetic datasets, such as Phi-3-Mini [6], to maintain performance while reducing unnecessary data accumulation. This trend directly parallels the challenges faced in ANN-based Inverse Kinematics (IK) modeling, where excessive training samples may lead to computational inefficiencies without substantial improvements in accuracy. 

The calculation of IK for robot manipulators is a fundamental problem in motion control. When using ANNs, the computation process can be streamlined into three primary steps: training, learning the inverse kinematics function, and obtaining joint angles for desired target locations of the end-effector. Unlike conventional geometric, algebraic, or Jacobian-based methods, which require explicit mathematical modeling, ANN-based approaches avoid cumbersome parameter calibration and issues related to singularities [7-13]. Traditional IK computations often involve Denavit-Hartenberg (D-H) parameters, which can be difficult to adjust for structural inconsistencies, such as mechanical deflections or joint looseness. Furthermore, conventional methods may lead to inefficient working paths or fail entirely in certain singularity postures. 

In ANN-based IK solutions, previous studies have demonstrated that robotic systems can learn their IK characteristics directly via ANNs without requiring explicit mathematical formulations [14]. Some approaches incorporate feedback from joint angular configurations into the ANN learning process [15], allowing robots to adaptively improve their performance. The number and quality of training samples directly affect the approximation error, which consists of bias (due to insufficient model complexity), variance (due to sensitivity to training data), and irreducible noise. In the context of supervised learning, the generalization performance of an ANN hinges not only on the network architecture and optimization algorithm but also critically on the representativeness and density of the training dataset [9]. Different techniques have been proposed to improve the efficiency of training data acquisition, such as active learning, uniform grid sampling in task space, and error-driven sample refinement [11], which aim to expose the ANN to a more diverse and representative set of examples.
 
While many works in the literature have adopted relatively large volumes of training samples, studies vary significantly in their sample sizes. Duka [12] used 1,000 randomly generated sets of joint angle values with 100 hidden neurons. However, determining the optimal number of training samples remains an open question. Some researchers have proposed integrating uncertainty estimation techniques—such as Bayesian neural networks or ensemble models—to identify low-confidence predictions and guide adaptive sampling [16]. Furthermore, excessive data may lead to computational inefficiencies or overfitting, while inadequate sampling may result in poor generalization and unreliable motion control. In [17], the author tried to answer for the bounding problem on the number of samples, but it was not on the general problem setting. Alwosheel et. al [18] argued that it is unknown what sample size requirements are appropriate when using ANN. Mehrotra et. al [19] addressed the relationship between the number of hidden layer nodes and the number of samples needed for effective learning, but not for the approximation such as IK. Thus, balancing the trade-off between sample size and model efficiency is critical in ANN-based IK modeling. 

This study aims to investigate the relationship between training sample size and model accuracy in ANN-based IK approximation. By examining statistical learning principles, approximation theory, and robotics-specific command-follow setting, this research seeks to develop a generalization methodology so that one can secure the ANN model efficiency. 

The manuscript contributes to the field in the following ways:
\begin{itemize}
  \item Mathematical Framework: This research derives explicit bounds on approximation error using Lipschitz continuity and ANN structure, which is not found in other IK studies. 
  \item Sample Efficiency Metric: The introduction of the error-to-sample-spacing ratio is novel and practical. 
  \item Architecture Independence: The lemmas in this study are not tied to specific ANN configurations. 
  \item Empirical Validation: This study validates the theory with simulations on a 3-DOF manipulator, showing saturation of error improvement beyond 125 samples.
\end{itemize}

This paper is composed of the following sections. In Section 2, mathematical formulation is presented for the problem given. Section 3 is devoted to the application of theories developed in Section 2 for the IK problem. In Section 4, simulation configurations with problem set-up were explained, and the results are explained and discussed in Section 5. Afterward, the concluding remarks are given in Section 6.

\section{Problem Formulation}
The overview of the signal flow for this problem formulation is shown in Fig. 1. Three end-effector variables in the Cartesian coordinate as inputs are normalized for ANN, and three joint angles are yielded as outputs. Output positions are feedback and compared with inputs, and this is done by the geometric forward kinematics (FK) [20], and errors $e$ between inputs and outputs by ANN and FK functions are obtained. The error $e$ is considered for the estimate accuracy.

\begin{figure}[htbp]
\centering
\includegraphics[width=0.8\textwidth]{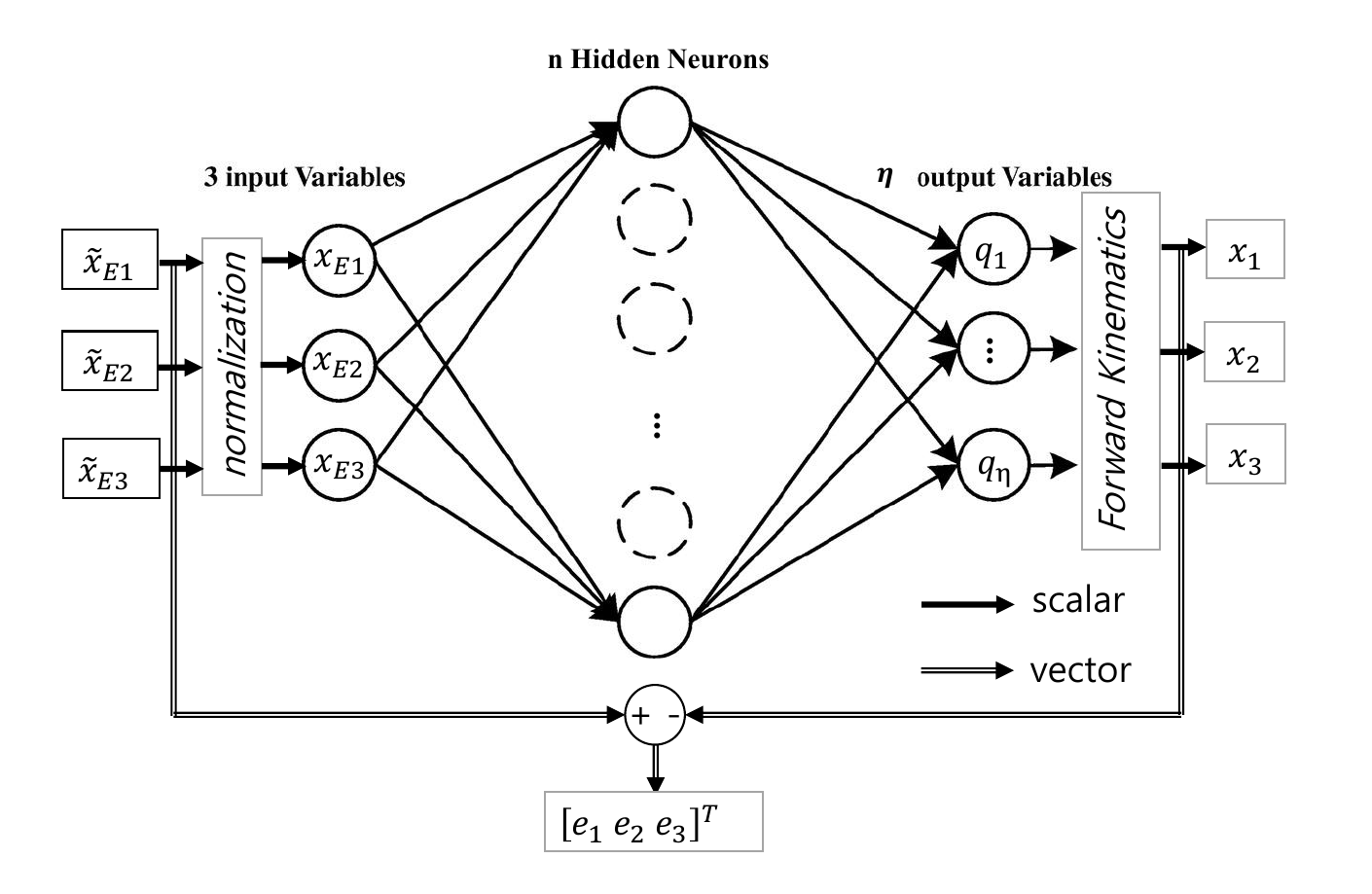}
\caption{Artificial Neural Networks for the Inverse Kinematics Problem with Feedback for Error Calculation}
\label{fig:ANN4IKchart}
\end{figure}

Mathematically, the relationship between training sample size and approximation error can be framed within the context of statistical learning theory. Consider the true inverse kinematics $f^*$ as
\begin{equation}
f^* : \mathbf{X} \subset \mathbb{R}^3 \to \mathbf{Q} \subset \mathbb{R}^\eta
\end{equation}
mapping from Cartesian space $\mathcal{X}$ to joint space $\mathcal{Q}$. 
And consider the following approximation model 
\begin{equation}
\hat{f}_n(\mathbf{X}) = \sum_{j=1}^{l}\sum_{i=1}^{3} w_{jk}^{(2)}\phi\left(w_{ij}^{(1)} x_i+b_j^{(1)}\right)+b_k^{(2)} \quad \text{for } i\in\{1,2,3\}, \ j\in\{1,2,\cdots,l\}, \text{ and } k\in\{1,2,\cdots,\eta\}
\end{equation}
, which is parameterized by the input $i$ and hidden node $j$ weights $w$ and hidden node $j$ and output node $k$ biases $b$. The model is learned by a neural network trained on $n$ samples. The function $\phi: \mathbb{R} \to \mathbb{R}$ is the activation function in the hidden layer.
The expected error $e$ typically decreases as $n$ increases, subject to the capacity of the network and the complexity of $f^*$ as
\begin{equation}
e = \mathbb{E}\left[ \| g(f^*(X)) - g(\hat{f}_n(X)) \|_2^2 \right]
\end{equation}
where $g(\cdot)$ is the forward kinematics function.

According to classical results in approximation theory, for a network with fixed architecture, it is reported that the convergence rate of the error often follows
\begin{equation}
O\left(\frac{1}{n^{-\alpha}}\right) \quad \text{for some } \alpha > 0
\end{equation}
, depending on the smoothness of the target function [21].

In this paper, the main focus is on the theoretical findings on the convergence scale, which depends on the number of training samples $n$. Therefore, it is essential to relate the expected error in Eq. (3), appropriately. To quantify the differences, consider a training data point $X_E$ and a test point incremented from $X_E$ by $\Delta X$ to examine the risk of the deviation by its approximation. Now, the approximation by the ANN model $\hat{f}_n(\cdot)$ can be expressed by the inequality which is in the form of the Lipschitz inequality as
\begin{equation}
\| g(\hat{f}_n(X_E + \Delta X)) - g(\hat{f}_n(X_E)) \|_2 \le \gamma \| \Delta X \|_2
\end{equation}
where $\Delta X \in \mathbb{R}^3$ in $[0,1]$ is the normalized Cartesian DOFs, and $\gamma$ is the Lipschitz constant. It is general to have a test point which is not trained in the network. The risk at the test point can be derived as of the following.

\begin{lemma}
(Upper bounding the approximation error of the ANN function at the test point $X_T = X_E + \Delta X$) For the upper bounding error, by the Lipschitz inequality, the error at $X_T$ is bounded as
\begin{equation}
e_{X_T} \le (\gamma^2 + 1) \| \Delta X \|_2^2.
\end{equation}
\end{lemma}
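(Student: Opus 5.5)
The plan is to read $e_{X_T}$ as the pointwise version of the error in Eq. (3), that is,
\[
e_{X_T}=\| g(f^*(X_T)) - g(\hat{f}_n(X_T)) \|_2^2 ,
\]
and to use two structural facts. First, $f^*$ is the true inverse kinematics, so $g(f^*(X))=X$ for every reachable $X$. Second, $X_E$ is a training point, and I will assume the trained network reproduces it exactly in task space, $g(\hat{f}_n(X_E))=X_E$. The statement leaves this interpolation assumption implicit, and I would state it explicitly. With these two facts the error at the untrained point $X_T=X_E+\Delta X$ can be rewritten in terms of $\Delta X$ and the network's own displacement
\[
D:=g(\hat{f}_n(X_E+\Delta X))-g(\hat{f}_n(X_E)).
\]
Indeed, $g(f^*(X_T))-g(\hat{f}_n(X_T)) = X_E+\Delta X - (X_E + D) = \Delta X - D$.

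Next I expand the square:
\[
e_{X_T}=\|\Delta X\|_2^2+\|D\|_2^2-2\langle \Delta X, D\rangle .
\]
The Lipschitz inequality (5) gives $\|D\|_2^2\le\gamma^2\|\Delta X\|_2^2$. Hence
\[
e_{X_T}\le(\gamma^2+1)\|\Delta X\|_2^2
\]
as soon as the cross term is nonnegative, i.e. $\langle \Delta X, D\rangle\ge 0$.

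The main obstacle is exactly this cross term, and I expect it to be the hardest step. Without further information one only has the triangle-inequality bound
\[
\|\Delta X - D\|_2\le(1+\gamma)\|\Delta X\|_2 ,
\]
which yields $(1+\gamma)^2\|\Delta X\|_2^2\le 2(\gamma^2+1)\|\Delta X\|_2^2$. That is the same order in $\|\Delta X\|_2^2$ but carries an extra factor of at most $2$.

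To obtain the stated constant, I would first try to justify $\langle \Delta X, D\rangle\ge 0$ as a local consistency property of a well-trained network. Near a training point, moving the commanded position by $\Delta X$ should move the realized end-effector position in a direction making an acute angle with $\Delta X$. To first order, $D\approx J\,\Delta X$, where $J$ is the Jacobian of $g\circ\hat{f}_n$ at $X_E$. A network that fits the identity map $g\circ f^*$ well has $J$ close to $I$, so $J$ is positive semidefinite and the inner product is nonnegative.

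If that argument cannot be made rigorous, the fallback is to state the lemma under this directional assumption, or with the constant $(1+\gamma)^2$. Either way the conclusion that matters for the rest of the paper is unchanged: the test-point error scales quadratically with the sample spacing $\|\Delta X\|_2$, with a constant depending only on $\gamma$.
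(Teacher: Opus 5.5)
\textbf{Gap.} Your setup matches the paper's proof. It too assumes $g(\hat f_n(X_E))\approx X_E$, uses $g(f^*(X_T))=X_T$, and rewrites the left side of Eq. (5) as $\|X_T-g(\hat f_n(X_T))-\Delta X\|_2$, which is your $\|D\|_2$. You have located the real difficulty correctly, but the step you propose for it cannot work: under the stated hypotheses the bound is false.

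Nothing excludes $g(\hat f_n(X_T))=X_E-\gamma\Delta X$, for example $g\circ\hat f_n$ locally equal to $X\mapsto X_E-\gamma(X-X_E)$. Then:
\begin{itemize}
\item the training point is reproduced exactly;
\item Eq. (5) holds with equality and $D=-\gamma\Delta X$;
\item $e_{X_T}=(1+\gamma)^2\|\Delta X\|_2^2>(\gamma^2+1)\|\Delta X\|_2^2$ whenever $\gamma>0$ and $\Delta X\neq0$.
\end{itemize}

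So your $J\approx I$ argument is an additional hypothesis, not a derivation. Fitting values at sparse training points does not control the Jacobian between them. Even granting $J\approx I$, a first-order expansion fixes the sign of $\langle\Delta X,D\rangle$ only for $\Delta X$ small relative to the remainder. It does not do so at the half-spacing used in Eq. (24), which is $1/2$ in normalized units when $n=8$.

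\textbf{How the paper gets its constant.} The paper reaches $\gamma^2+1$ through a step it calls the triangle inequality with both sides squared:
\[
\bigl|\|a\|_2^2-\|b\|_2^2\bigr|\le\|a-b\|_2^2, \qquad a=X_T-g(\hat f_n(X_T)),\quad b=\Delta X .
\]
This is false in general (take $a=2b$). Squaring the reverse triangle inequality gives only $(\|a\|_2-\|b\|_2)^2\le\|a-b\|_2^2$. Since $a=\Delta X-D$ and $a-b=-D$, the claimed inequality is equivalent to $0\le\langle\Delta X,D\rangle\le\|D\|_2^2$. The paper therefore silently assumes exactly your cross-term condition, plus an upper bound on it.

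\textbf{Repair.} Your fallback is the right fix. Either add $\langle\Delta X,D\rangle\ge0$ as an explicit hypothesis, or state the lemma with $(1+\gamma)^2$, which the reverse triangle inequality does deliver. The second option replaces the numerator $27\bar w^2+1$ of Eq. (25) by $(3\sqrt{3}\,\bar w+1)^2$. That is at most a factor $2$ larger and leaves the dependence on $n$ unchanged.
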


\begin{proof}
The core of this proof is the geometric relationship between trained and untrained data points. As mentioned earlier, no matter how small the number of training samples is, the trained points are very close to the exact IK solutions. Therefore,
\[
\| g(\hat{f}_n(X_E + \Delta X)) - g(\hat{f}_n(X_E)) \|_2 \approx \| g(\hat{f}_n(X_E + \Delta X)) - X_E \|_2
\]
The right-hand side becomes,
\[
\| g(\hat{f}_n(X_E + \Delta X)) - (X_E + \Delta X) + \Delta X \|_2 = \| X_T - g(\hat{f}_n(X_T)) - \Delta X \|_2
\]
By the triangle inequality with both sides squared,
\[
\left| \| X_T - g(\hat{f}_n(X_T)) \|_2^2 - \| \Delta X \|_2^2 \right| \le \| X_T - g(\hat{f}_n(X_T)) - \Delta X \|_2^2
\]
Therefore,
\[
\left| \| g(f^*(X_T)) - g(\hat{f}_n(X_T)) \|_2^2 - \| \Delta X \|_2^2 \right| \le \gamma^2 \| \Delta X \|_2^2.
\]

Using the error notation, the above inequality becomes
\[
\left| e_{X_T} - \| \Delta X \|_2^2 \right| \le \gamma^2 \| \Delta X \|_2^2.
\]
The relation can be either
\begin{equation*}
e_{X_T} - \|\Delta X\|_2^2 \le \gamma^2 \|\Delta X\|_2^2 \quad \text{or} \quad \|\Delta X\|_2^2 - e_{X_T} \le \gamma^2 \|\Delta X\|_2^2
\end{equation*}
However, we are only interested in the upper bound, and the error at the test point becomes
\begin{equation*}
e_{X_T} \le (\gamma^2 + 1) \|\Delta X\|_2^2.
\end{equation*}
This is the end of the proof. \hfill $\square$
\end{proof}

\begin{lemma}
(Lipschitz constant $\gamma$) Within the assumption that the activation of the hidden and output layer is the ReLU and linear functions, $\gamma$ can be chosen as
\begin{equation}
\gamma = \sqrt{3} \left\| \frac{\partial \hat{f}_n}{\partial X} \right\|_\infty
\end{equation}
\end{lemma}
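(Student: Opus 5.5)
The plan is to obtain $\gamma$ from a mean-value estimate on the network $\hat{f}_n$ of Eq. (2), using the piecewise-linear structure that ReLU hidden units and a linear output layer give it. With these activations, $\hat{f}_n$ is continuous and piecewise affine on the normalized input cube $[0,1]^3$. Its Jacobian $J(X)=\partial \hat{f}_n/\partial X$ exists almost everywhere. On each linear region it equals $W^{(2)}D(X)W^{(1)}$, where $D(X)$ is the diagonal $0/1$ matrix of active hidden units. Hence $J$ takes only finitely many values and $\|J\|_\infty:=\sup_X \max_{k,i}|\partial \hat{f}_{n,k}/\partial x_i|$ is finite. The first step is to record this, so that the supremum in the statement is well defined.

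The second step is a componentwise bound along a segment. Take $X_E$ and $X_T=X_E+\Delta X$. The segment between them meets finitely many linear regions, and $\hat f_n$ is continuous across their boundaries. Integrating the directional derivative piece by piece (equivalently, the fundamental theorem of calculus for Lipschitz functions) gives, for each output $k$,
\[
|\hat{f}_{n,k}(X_T)-\hat{f}_{n,k}(X_E)| \le \sum_{i=1}^{3}\|J\|_\infty|\Delta x_i| = \|J\|_\infty\|\Delta X\|_1 \le \sqrt{3}\,\|J\|_\infty\|\Delta X\|_2 .
\]
The last inequality is Cauchy--Schwarz in $\mathbb{R}^3$. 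This is exactly where the factor $\sqrt{3}$ comes from: it reflects the three Cartesian input DOFs.

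The third step is to pass to the quantity in Eq. (5). That quantity involves $g\circ\hat f_n$ rather than $\hat f_n$, and is measured in the Euclidean norm over all outputs. Here I would read the $\infty$-norm in the statement as the induced row-sum norm, $\|J\|_\infty=\sup_X\max_k\sum_i|J_{ki}|$. Each output component then obeys the bound of the second step with constant $\sqrt{3}\,\|J\|_\infty$. I would then absorb the forward kinematics $g$ and the output dimension $\eta$ into the same constant. The justification is that $g$ is smooth on the bounded joint range and, on the normalized scale the paper uses, has derivative norm at most one.

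The main obstacle is this last absorption. Strictly, the Euclidean norm over $\eta$ outputs produces an extra $\sqrt{\eta}$, and composing with $g$ multiplies by $\mathrm{Lip}(g)$. So the lemma as stated only holds under the paper's normalization convention, or with $\gamma$ read as a constant for the network map itself. I would first try to make that normalization explicit, i.e.\ assume $\|\partial g/\partial q\|\le 1$ after scaling. Failing that, I would state the honest bound $\gamma=\sqrt{3\eta}\,\mathrm{Lip}(g)\,\|\partial\hat f_n/\partial X\|_\infty$ and note that it reduces to the stated form under that convention.
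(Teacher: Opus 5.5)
Your first two steps are sound, and they are more careful than the paper: the Jacobian $W^{(2)}D(X)W^{(1)}$ on each linear region, together with segment-by-segment integration, gives a rigorous mean-value inequality for the ReLU network, which the paper only gestures at. The gap is in the third step. What your chain of estimates actually proves is $\|g(\hat{f}_n(X_T))-g(\hat{f}_n(X_E))\|_2\le\sqrt{3\eta}\,\mathrm{Lip}(g)\,\|\partial\hat{f}_n/\partial X\|_\infty\|\Delta X\|_2$. Assuming $\mathrm{Lip}(g)\le 1$ removes one factor, but nothing removes $\sqrt{\eta}$, so ``absorbing the output dimension into the same constant'' is not justified. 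Your closing claim that this bound reduces to the stated form under the convention is also false: for $\eta=3$ it gives $3\|\partial\hat{f}_n/\partial X\|_\infty$, not $\sqrt3\|\partial\hat{f}_n/\partial X\|_\infty$. Nor is the extra factor intrinsic, as you suggest; it comes from the order in which you change norms. You spend the $\sqrt3$ on the input side via $\|\Delta X\|_1\le\sqrt3\|\Delta X\|_2$. You then pay another $\sqrt\eta$ to pass to the Euclidean norm in joint space, and only afterwards apply $g$.

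The paper avoids this by composing before leaving the $\infty$-norm. By the chain rule, $\partial(g\circ\hat{f}_n)/\partial X=(\partial g/\partial Q)(\partial\hat{f}_n/\partial X)$. Submultiplicativity of the induced $\infty$-norm, together with the normalization taken in that same norm, $\|\partial g/\partial Q\|_\infty\le 1$, gives $\|\partial(g\circ\hat{f}_n)/\partial X\|_\infty\le\|\partial\hat{f}_n/\partial X\|_\infty$. The composite Jacobian $A$ has only three rows (the Cartesian coordinates), so
\[
\|Av\|_2\le\sqrt3\,\|Av\|_\infty\le\sqrt3\,\|A\|_\infty\|v\|_\infty\le\sqrt3\,\|A\|_\infty\|v\|_2 .
\]
Feeding this into your mean-value argument along the segment yields exactly $\gamma=\sqrt3\,\|\partial\hat{f}_n/\partial X\|_\infty$, independent of $\eta$. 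Equivalently, keep joint-space increments in the $\infty$-norm: $|\Delta q_k|\le\|\partial\hat{f}_n/\partial X\|_\infty\|\Delta X\|_\infty$ with the row-sum norm. Then apply $g$ under the $\infty$-normalization, and convert to the Euclidean norm only at the three-dimensional Cartesian output. In this argument the $\sqrt3$ is the price of the three Cartesian \emph{output} coordinates, not of Cauchy--Schwarz on the input; the input side is free, since $\|\Delta X\|_\infty\le\|\Delta X\|_2$. This is also why, for general $\eta$, the normalization has to be the induced-$\infty$ one the paper uses. A Euclidean Lipschitz bound on $g$ would leave a factor $\sqrt{\eta}$ in place of $\sqrt3$.
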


\begin{proof}
For the output $\mathbb{Q}$ of $\hat{f}_n$, in fact only one activation function appears in the calculation process from the input as
\begin{equation}
\phi(x) := [\max\{0, x_1\}, \max\{0, x_2\}, \dots, \max\{0, x_l\}]^T
\end{equation}
Here, $u_j$, $w_{ij}^{(l)}$, and $b_j^{(l)}$ be the $j$-th hidden neuron, the weight from $i$-th to $j$-th elements at $l$-th layer, and the $j$-th bias at $l$-th layer, respectively, and consider a shallow network as described earlier,
\begin{align}
u_j &= \phi\left(\sum_{i=1}^{3} w_{ij}^{(1)} x_i + b_j^{(1)}\right), \quad j \in \{1, 2, \dots, l\}, \\
q_k &= \sum_{j=1}^{\eta} w_{jk}^{(2)} u_j + b_k^{(2)}, \quad k \in \{1, 2, \dots, \eta\}
\end{align}
where $X = [x_1, x_2, x_3]^T$.

The approximation function $\hat{f}_n$ is not continuously differentiable on $\mathbb{R}^3$, but this condition holds only at $x = 0$. And the function is locally Lipschitz on $\mathbb{R}^3$. The partial derivative matrix of $\hat{f}_n$ per Cartesian DOF is the inverse Jacobian, which is given by
\begin{equation}
\left[\frac{\partial \hat{f}_n}{\partial X}\right] = \begin{bmatrix}
\frac{\partial q_1}{\partial x_1} & \frac{\partial q_1}{\partial x_2} & \frac{\partial q_1}{\partial x_3} \\[6pt]
\frac{\partial q_2}{\partial x_1} & \frac{\partial q_2}{\partial x_2} & \frac{\partial q_2}{\partial x_3} \\[6pt]
\vdots & \vdots & \vdots \\[6pt]
\frac{\partial q_\eta}{\partial x_1} & \frac{\partial q_\eta}{\partial x_2} & \frac{\partial q_\eta}{\partial x_3}
\end{bmatrix}
\end{equation}

Using the infinity induced matrix norm in $\mathbb{R}^3$ for $\left[\frac{\partial \hat{f}_n}{\partial X}\right]$ from Eq. (4), we have
\begin{equation}
\left\| \frac{\partial \hat{f}_n}{\partial X} \right\|_\infty = \max_{k \in \{1,2,\dots,\eta\}} \left\{ \left| \frac{\partial q_k}{\partial x_1} \right| + \left| \frac{\partial q_k}{\partial x_2} \right| + \left| \frac{\partial q_k}{\partial x_3} \right| \right\}
\end{equation}

Because the output $q$ has the same format, but only the weights and biases are varying, the output partial derivative can be estimated in a general form from Eqs. (2 and 3). Using the chain rule, knowing $\phi^{\prime}(x)=0$ (for $x<0$) or $1$ (for $x>0$), the partial derivative of an output with respect to an input in a general form, becomes
\begin{equation}
\frac{\partial q_{k}}{\partial x_{i}} = \frac{\partial q_{k}}{\partial u_{j}}\frac{\partial u_{j}}{\partial x_{i}} = w_{jk}^{(2)} \cdot (1 \text{ or } 0).
\end{equation}
Therefore, the maximum magnitude of the partial derivative of $g$s with respect to $x$s can have its upper bound by
\begin{equation}
\left|\frac{\partial q_{k}}{\partial x_{i}}\right| \le w_{jk}^{(2)}, \quad j \in \{1,2,\dots,l\} \text{ and } k \in \{1,2,\dots,\eta\}.
\end{equation}
And using the norm inequality relation and the chain rule, 
\begin{equation}
\left\|\frac{\partial g(\hat{f}_{n})}{\partial X}\right\|_{\infty} = \left\|\frac{\partial g}{\partial Q} \cdot \frac{\partial \hat{f}_{n}}{\partial X}\right\|_{\infty} \le \left\|\frac{\partial g}{\partial Q}\right\|_{\infty} \cdot \left\|\frac{\partial \hat{f}_{n}}{\partial X}\right\|_{\infty} \le \left\|\frac{\partial \hat{f}_{n}}{\partial X}\right\|_{\infty}
\end{equation}
for $\left\|\frac{\partial g}{\partial Q}\right\|_{\infty}$ is atmost 1 due to the normalization, which values will convert back to the Cartesian space.

And the Lipschitz relation employed here can be re-written as
\begin{equation} 
\left\|\frac{g(\hat{f}_{n}(X_E + \Delta X)-\hat{f}_{n}(X_E))}{\Delta X}\right\|_{2} \le \sqrt{3}\left\|\frac{\partial g(\hat{f}_{n})}{\partial X}\right\|_{\infty} \le \sqrt{3}\left\|\frac{\partial \hat{f}_{n}}{\partial X}\right\|_{\infty} = \gamma
\end{equation}
due to the dimension of $X$ is three. This is the end of the proof. \hfill $\square$
\end{proof}

Based on Lemmas 1 and 2, the upper bounding error, by the Lipshcitz inequality, the error at $X_T$ can be expressed as of the following.
\begin{equation}
e_{X_T} \le \left(3\left\|\frac{\partial \hat{f}_{n}}{\partial X}\right\|_{\infty}^2 + 1\right) \|\Delta X\|_2^2.
\end{equation}

\section{Technical Results}
To investigate the theoretical bounding on the ANN approximation, a 3 degree-of-freedom (DOF) articulated robot manipulator (which three links are 70 mm long) shown in Fig. 2 is taken as an example for this study. The figure shows the manipulator tries to follow the reference path (which is the heart shape), but it is not able to follow it. The manipulator moves the end-effector with three revolute actuators (RRR), which the angles in $\mathcal{Q}$ are in the range of $q_{1} \in [\pi/3, 2\pi/3]$, $q_{2} \in [-2\pi/3, -\pi/3]$, $q_{3} \in [-2\pi/3, -\pi/3]$. The base (or first) joint rotates a yaw angle, which changes the aiming of the end-effector. The other two joints rotate pitch angles governing the height and radial length of the tip. The desired target location of the end-effector $X = [x_1, x_2, x_3]$ can be determined by the analytic forward kinematics (FK) [20] for given joint angle values $\mathcal{Q}$. The working range in the Cartesian (global) coordinates covers $x_1 \in [20, 100]$ mm, $x_2 \in [20, 100]$ mm, $x_3 \in [-10, 60]$ mm.

\begin{figure}[htbp]
\centering
\includegraphics[width=0.6\textwidth]{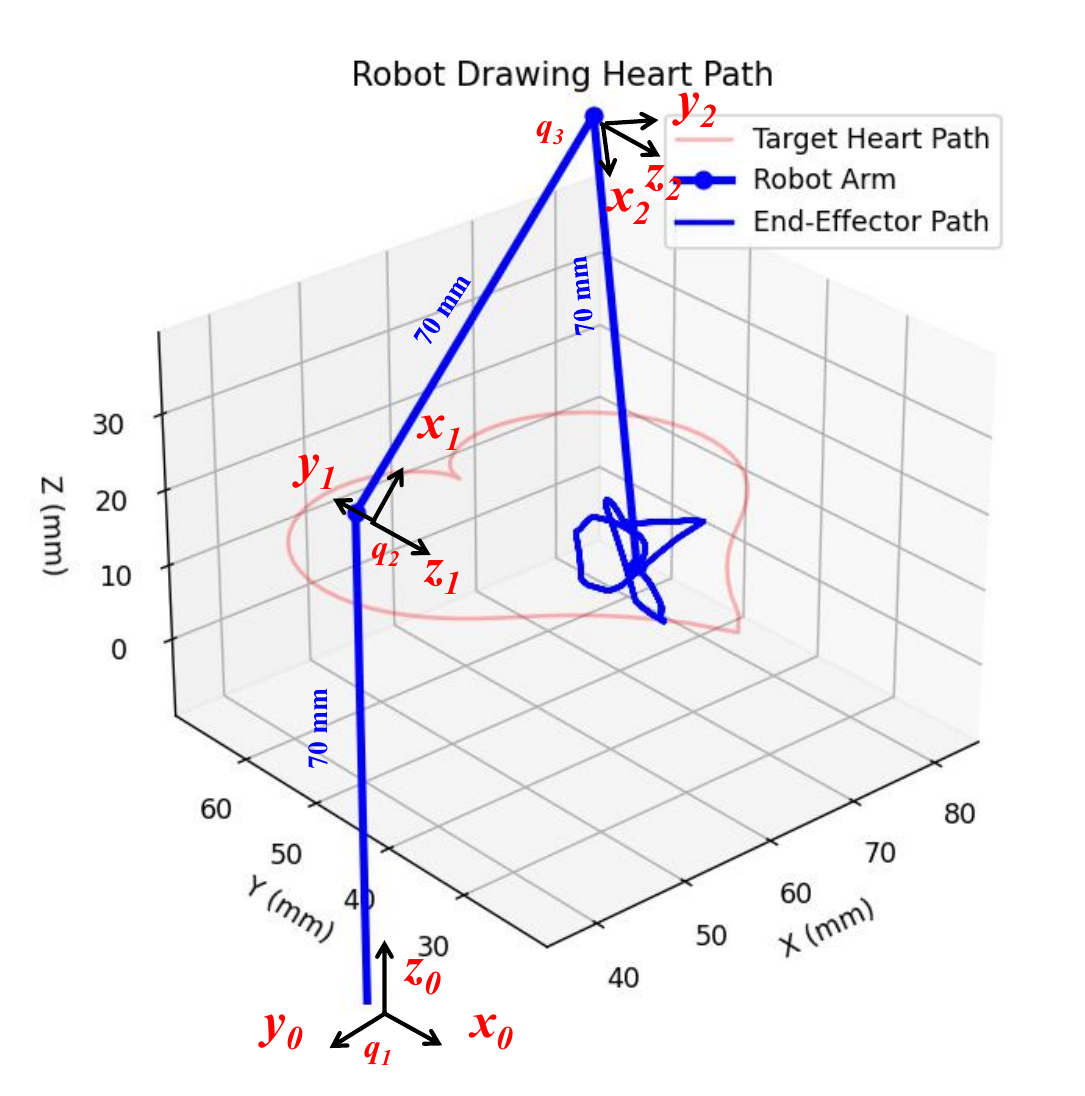}
\caption{The robot configuration with the local coordinate systems}
\label{fig:robot_config}
\end{figure}

The derivation of the bounding error requires the knowledge of weights in the network as shown before. The weights, $w_{ij}$s and $w_{jk}$s, are changing recursively over the training process, and converge to certain values typically in the range $[-5, 5]$ as the networks get trained. Therefore, it is reasonable that the average weight $\bar{w}$ is taken into consideration in places of $w_{ij}$s and $w_{jk}$s, and $\bar{w}$ is dealt with empirically in this case study. In this regard, a conservative bound to some extent will be obtained. Then, the derivative inequality relation becomes
\begin{equation}
\left| \frac{\partial q_k}{\partial x_i} \right| \le \bar{w}.
\end{equation}
Hence, by Eqs. (16), (18), and (22),
\begin{equation}
\left\| \frac{\partial f}{\partial X} \right\|_\infty \le 3\bar{w}
\end{equation}

When $X$ is a trained and $X + \Delta X$ is an arbitrary untrained point (or the location which is not selected for the ANN training process), at the point $X$ the difference between $\hat{f}_n(X)$ and joint angles calculated by IK will be minimal if only the training is done satisfactorily. But $\hat{f}_n(X + \Delta X)$ can be varied from the IK exact solutions depending on the training resolution. In general, no matter how small the number of training samples is, the trained points are very close to the exact IK solutions. And the difference can be the maximum at the middle point between training points. By this observation, the maximum likelihood error occurs with $\Delta X$ of

\begin{equation}
\left|\Delta X\right|_2 = \frac{1}{2} \cdot \frac{1 - 0}{\sqrt[3]{n} - 1}
\end{equation}
where $n$ is the number of training samples. From Eq. (21) by Eqs. (23 and 24), the upper bound of the IK function for the articulated (RRR) robot by the ANN fitting can be estimated by
\begin{equation}
e_{X_T} \le \frac{27\bar{w}^2 + 1}{4(\sqrt[3]{n} - 1)^2}
\end{equation}
where $n$ is the main parameter to investigate the fitting error changes. The weight average is found from trained functions empirically, but it is taken as a constant or a parametric analysis variable in the estimation.

This error is to be taken into the accuracy evaluation in terms of $n$. The relationship in Eq. (25) is further discussed in the following simulation section by the case study of the 3-DOF manipulator.

\section{Simulation Study}
The number of samples at each translational DOF of the end-effector is varied. And only 10 hidden neurons are fixed as an ANN layout, and 5\% was used for both validation and test in training due to the small number of samples.

For samples, evenly divided end-effector positions in the Cartesian $X = [x_1, x_2, x_3]^T$ space are picked up. The joint angles, $Q = [q_1, q_2, q_3]^T$ vector, can be measured by setting the poses at given positions. However, for the convenience of the study, the geometric inverse-kinematics is used instead to calculate the $Q$ vectors for simulations by FK equations in [22]. Only the number of sampling is varied as explained in the following sections.

The calculation of IK by ANN is a relatively straightforward process. Firstly, the end-effector locations for various postures must be recorded (or the forward kinematics must be calculated from various angular positions). Secondly, the networks between the inputs of end-effector locations and outputs of joint angles as a function must be built. In the context of this simulation study, the networks under investigation employ a set of 16 hidden neurons as a configuration of ANNs. In view of the limited number of samples, 5\% of the data was allocated for the validation and testing stages during the training phase. As previously documented, the quantity of hidden neurons exerted limited influence on the outcomes [22]. As neural networks have the capacity to compute any function, it is possible to approximate the mapping between joint angle and Cartesian end-effector domains by means of neural networks without calculating the inverse kinematics [23].

The IK ANN model was implemented in Python 3.10, leveraging the TensorFlow 2.11.0 library, which provides high-level APIs for building and training deep learning models. The model construction utilized \texttt{tensorflow.keras} for defining layers, loss functions, optimizers, and training routines. All experiments were conducted on a laptop computer equipped with an Intel 11th Gen Intel(R) Core(TM) i5-1135G7 @ 2.40GHz CPU, 16GB RAM to ensure efficient training and visualization. The code was modular and well-documented to facilitate reproducibility and scalability to higher-dimensional problems or alternative robotic configurations. 

In this study, the ANN is designed to approximate the inverse kinematics (IK) mapping of a 3 degree-of-freedom (DOF) robotic manipulator. The network learns the transformation from the Cartesian space (desired end-effector posi-tions) to the joint space (corresponding joint angles) using supervised learning. The model architecture, hyperparameters, and training strategy are selected to balance model capacity, computational efficiency, and generalization performance. The model is trained using the mean squared error (MSE) loss function, a standard choice for regression tasks. The MSE penalizes larger deviations more heavily, which is useful in precision-demanding applications like robotics. The optimizer employed is the Adam algorithm, a first-order gradient-based method that combines the advantages of momentum and adaptive learning rates. It is widely used due to its fast convergence and robustness to hyperparameter settings. The model is trained for a maximum number of epochs, set to 500 for the sufficient saturation, but training can terminate earlier based on convergence. To avoid overfitting and reduce unnecessary computation, early stopping is selectively implemented with a patience of 10 epochs and a minimum delta improvement threshold. The network is trained in mini-batches of 8 samples to ensure efficient use of memory and stable gradient updates. The learning rate was set to be 0.001. Only the input variables are normalized to the range $[0, 1]$ using a MinMaxScaler to ensure faster and more stable training, for the linear activation is used for the output layer. For post-training, the model is evaluated on synthetic test trajectories, including predefined geometric paths such as heart or rectangular shapes in 3D space. The predictions are visualized alongside true trajectories, and the Euclidean error is computed to assess the model's accuracy.

The training set was determined to cover only the desired working domain. The sampling number, which is the same for each input, is a variable to examine the ANN performance. For example, when the sampling number is 125, the total number of training samples for three joint variables (one epoch) is $5\times5\times5 = 125$. In Figs. 4 and 5, the sampling points, being evenly spaced, are selected and shown by black dots. Again, these dots are the training samples of the neural network to be considered.

\begin{figure}[htbp]
\centering
\includegraphics[width=0.8\textwidth]{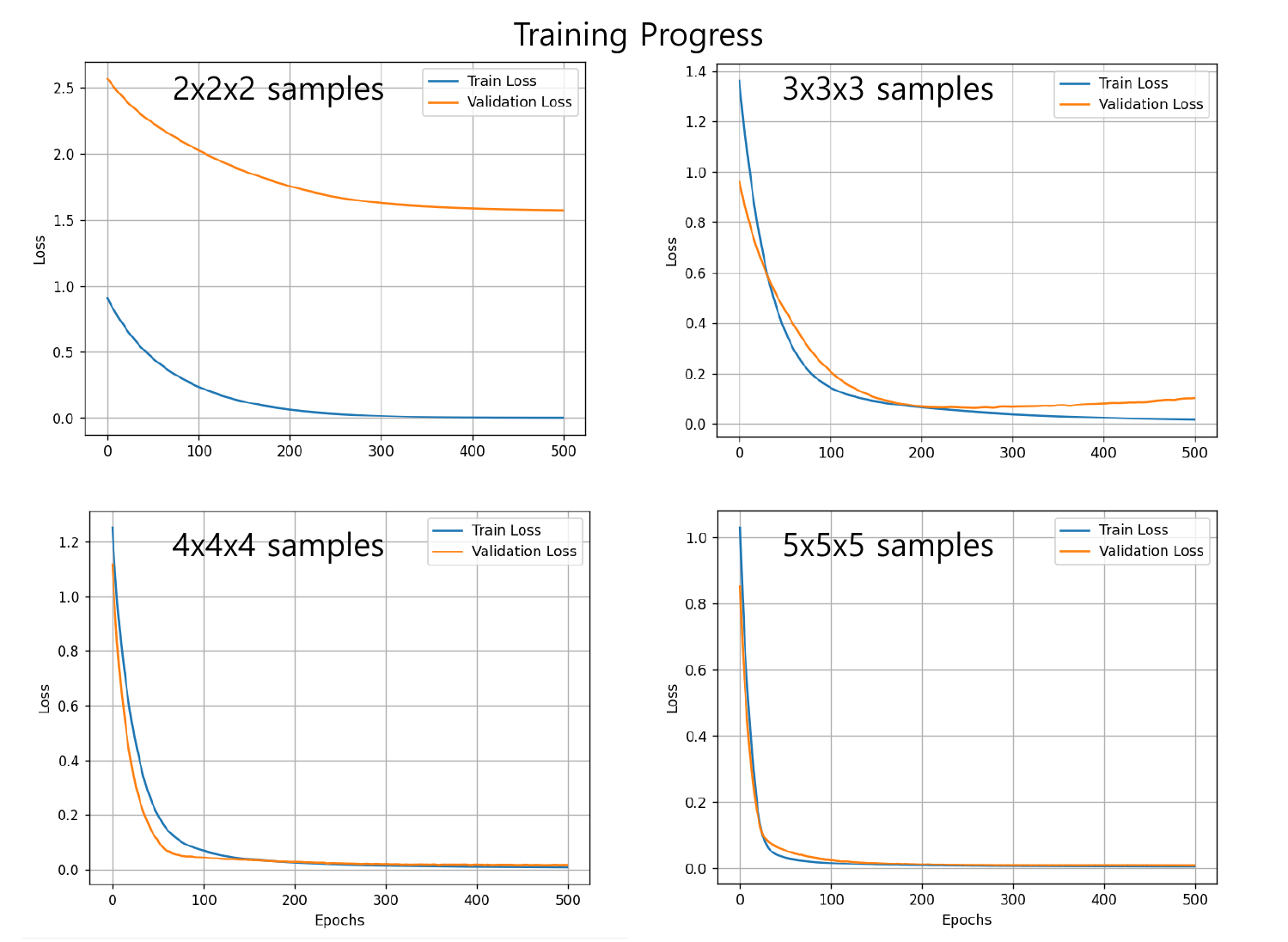}
\caption{Training Progress over the Epochs for Various Numbers of Samples}
\label{fig:training_progress}
\end{figure}

\section{Results and Discussion}

As demonstrated in Fig. 3, the training progress is displayed over the epoch. It is evident that as the quantity of training samples increases, the rate of reduction in loss diminishes, exhibiting a more rapid and substantial decline. As the number of samples increases from 64, the training converges to a negligible level after approximately 100 epochs. In contrast, the validation loss for 8 samples is approximately 1.5 at 500 epochs, suggesting that the performance of the function is inferior.

As illustrated in Figs. 4 and 5, the command is executed subsequent to the performance of the ANN fitting function, whereby the double rectangles are drawn in high and low heights. The red line indicates the targeted true path, while the blue dashed line illustrates the end-effector's following trajectory, as calculated by the ANN function. It is evident that black dots represent the training points. The training points are illustrated in the figures for the purpose of facilitating comparison. Due to the limited space available in this paper, the results obtained from the samples of 8, 27, 125 and 343 are displayed in the $xy$ and $yz$ planes of Figs. 4 (for 8 and 27 samples) and 5 (for 125 and 343 samples).

As demonstrated in the figures, a clear trend emerges: the efficacy of command-following performance is demonstrably enhanced by an increase in the number of samples. Indeed, the utilization of solely two extreme values per DOF in the training range is inadequate for ensuring optimal outcomes, as this results in trajectories that are scarcely depicted as rectangles. The superiority of three training points over two is evident in the enhanced command following, as illustrated by the similarity in shape to the objective character (see bottom of Fig. 4). When five samples are taken at each DOF, the outcomes in Fig. 5 are significantly enhanced, and the results demonstrate an acceptable trajectory shape.

As previously discussed, the joint angles for a desired location can be calculated by means of well-known analytic IK methods. However, it should be noted that the arctangent and arcsine functions in kinematic equations can give multiple values (i.e., the ambiguity). Furthermore, the inverse trigonometric functions give rise to the singularity problem. Conversely, as illustrated in the sub-figures of Fig. 5, the ANN IK solution consistently yields a unique, albeit approximate, result. The validity of this approximation is contingent on a number of parameters, including the number of training samples, the network layout structure, iteration algorithms, and other hyper-parameters.

\begin{figure}[htbp]
\centering
\includegraphics[width=0.8\textwidth]{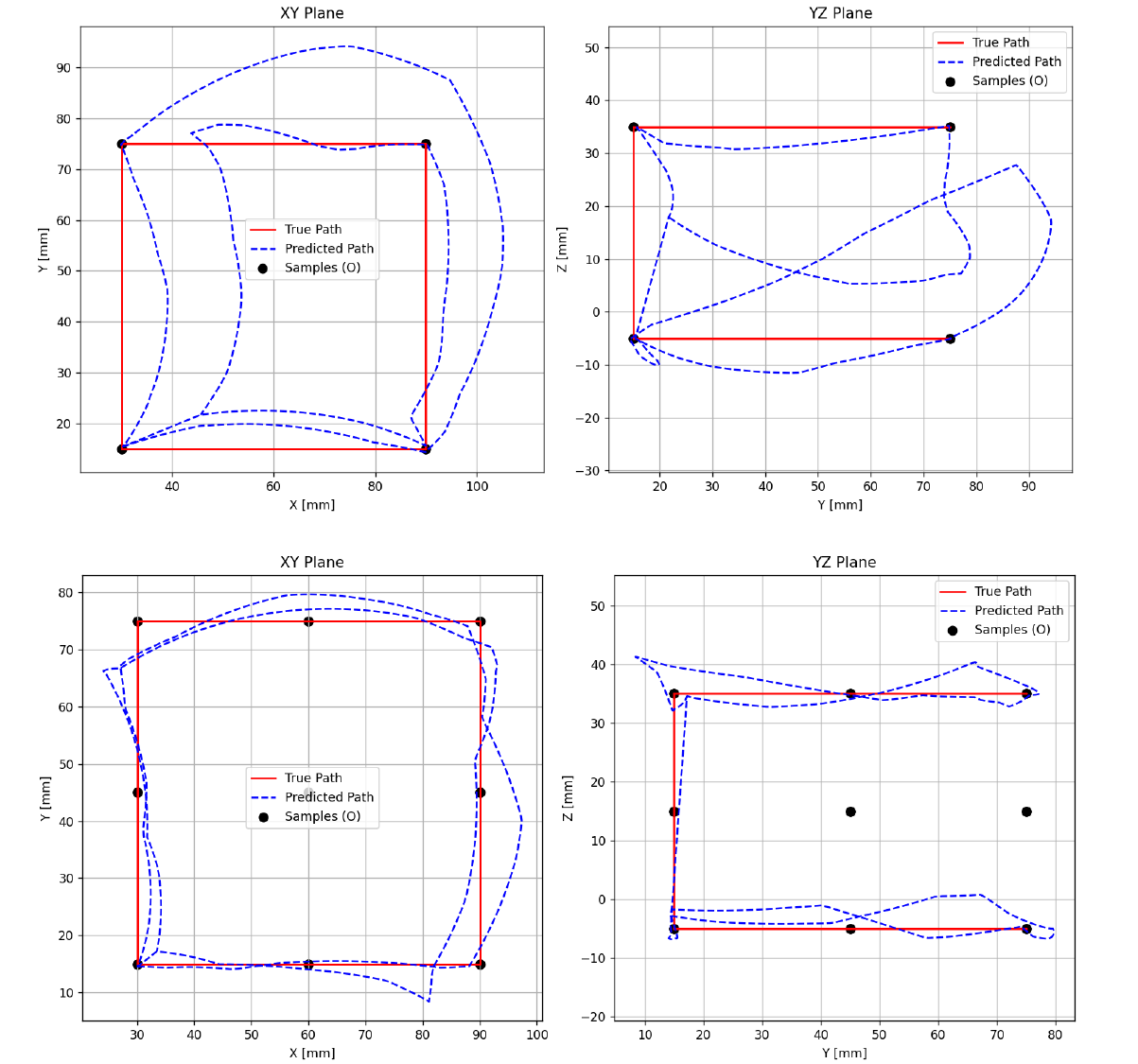}
\caption{Inverse Kinematics Artificial Neural Networks Function Performances}
\label{fig:IKANN1}
\end{figure}

\begin{figure}[htbp]
\centering
\includegraphics[width=0.8\textwidth]{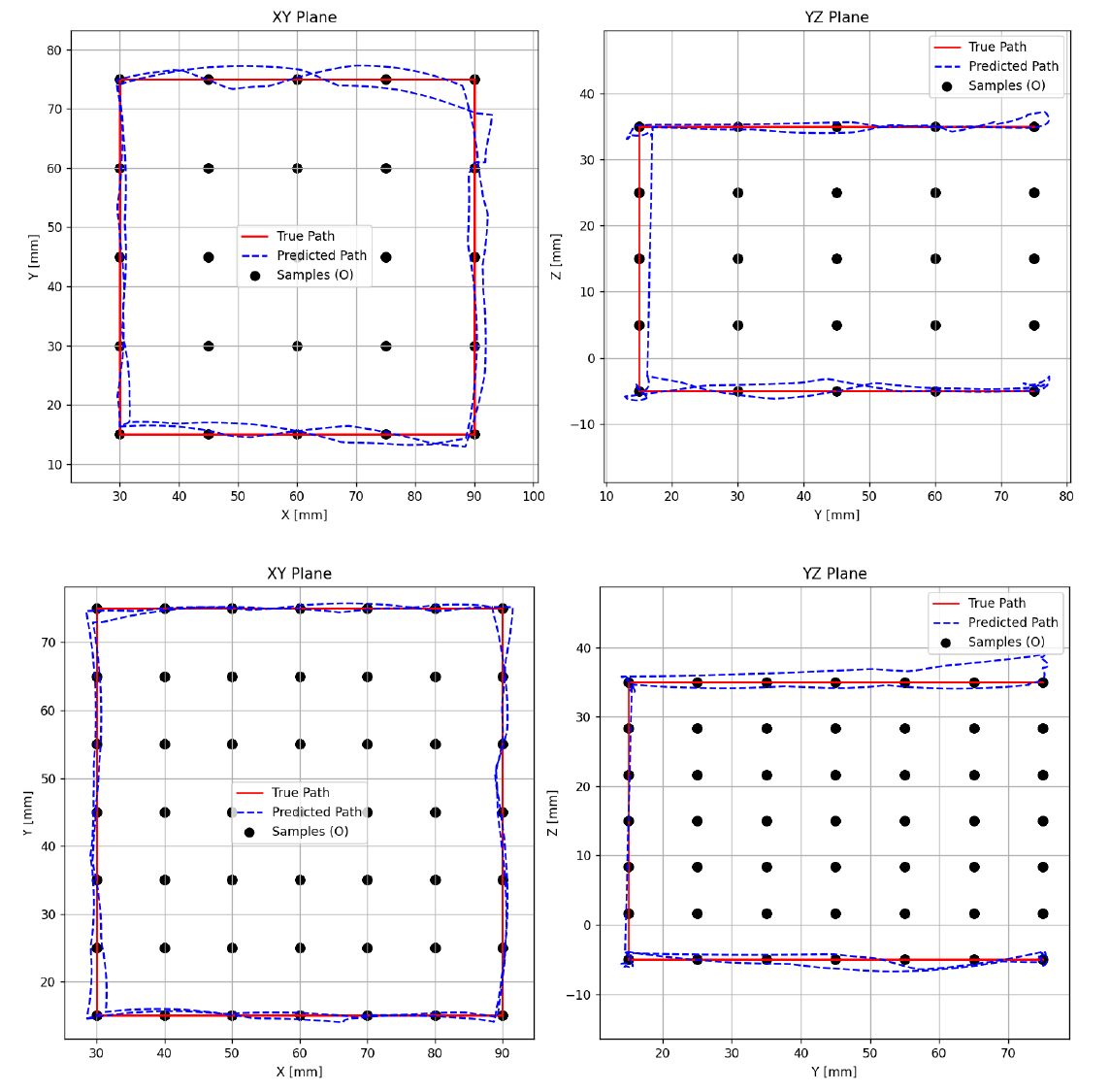}
\caption{Inverse Kinematics Artificial Neural Networks Function Performances}
\label{fig:IKANN2}
\end{figure}

As illustrated in Fig. 1, a series of simulation experiments was conducted, with the resulting errors summarized in Table 1. As demonstrated in earlier results, the errors decrease with an increase in the number of samples. The error estimates provided by Eq. (25) and the recorded weights from the experiments are displayed in the third column. It is important to note that the derivation in Eq. (25) was conducted with the normalization of parameters, and consequently, the estimates were rescaled into the experimental domain. As demonstrated in the table, the estimates show a high degree of similarity with the experimental results for up to the model on 64 samples. As described in Eq. (25), the estimates continue to decrease ($n$ is located in the denominator), however, the experimental errors appear to have not improved much.

\begin{table}[htbp]
\centering
\caption{Euclidean error by ANN in $X$ domain}
\label{tab:euclidean_error}
\begin{tabular}{cccc}
\toprule
\# of samples & $e_{\text{Exp.}}$ [mm] & $e_{\text{Est.}}$ [mm] & $e_{\text{Exp.}} / d$ \\ 
\midrule
$2 \times 2 \times 2 = 8$   & $19.31 \pm 3.75$ & 24.15 & 0.32 \\
$3 \times 3 \times 3 = 27$  & $5.31 \pm 0.70$  & 5.91  & 0.18 \\
$4 \times 4 \times 4 = 64$  & $2.73 \pm 0.12$  & 2.62  & 0.14 \\
$5 \times 5 \times 5 = 125$ & $2.17 \pm 0.20$  & 1.48  & 0.14 \\
$6 \times 6 \times 6 = 216$ & $1.85 \pm 0.24$  & 0.94  & 0.15 \\
$7 \times 7 \times 7 = 343$ & $1.52 \pm 0.21$  & 0.66  & 0.15 \\
$8 \times 8 \times 8 = 512$ & $1.23 \pm 0.12$  & 0.48  & 0.14 \\ 
\bottomrule
\end{tabular}
\end{table}

As illustrated in Fig. 5, the sparsity of the samples (black dots) is notable when compared with the blue dashed curves. As shown, the sampling distance of one DOF in the $X$ space is 15 mm, while the mean error is approximately 2.2 mm. A new metric has been proposed to facilitate a comparison between the accuracy and sampling size. This metric involves the division of error $e$ by spacing distance $d$ between samples, thereby offering a more nuanced assessment of the efficacy of the ANN model. The ratio is 0.14 for the model on 125 samples. The table indicates that the efficiency does not exhibit significant enhancement after selecting 64 samples for training, suggesting that the number 64 may be optimally set for the training size. Therefore, it can be concluded that the optimal error-to-sampling ratio for this problem set is 2.73 mm. In scenarios where the number of samples is limited, a greater margin of error is anticipated, resulting in a high error-sampling ratio. Conversely, an increase in the number of samples beyond the optimal number will result in a decrease in error, albeit to a marginal extent. This is due to the fact that the error will reach a state of saturation at a relatively narrow sampling distance, thereby resulting in a negligible decrease in the effective error. It is evident that utilizing a greater number of samples than 64 results in enhanced outcomes; however, the enhancement in precision becomes marginal.

\section{Conclusions}

The present study investigates the application of ANNs in the context of IK analysis for robot manipulator operation. The employment of ANNs rendered the utilization of IK calculations with no geometric information needed, but with only a limited number of training sample points being selected. ANN is responsible for the IK function, which is derived from the recorded locations of the end effector and the corresponding joint angles. However, an excess of sample points is impractical, and consequently, the study explores the mathematical relation of the training samples to the model accuracy. A framework was provided for the purpose of examining the extent to which the approximation function of the ANN can vary for different input variables, in relation to the bounds of the sample number. The mathematical derivations can be for the practical implementation of the ANN methodology.

In order to ascertain the validity of the proposed methodology, a 3 DOF drawing manipulator was set, and its IKs were solved by ANN. However, it was imperative to ascertain a sufficient number of training samples for the successful approximation, whilst identifying a minimum required number. The investigation focused on the relationship between the number of samples and the tracking performance, with a subsequent comparison of the tracking errors generated by the ANN with the results obtained through estimation methods. Consequently, the selection of four samples per DOF (20 mm long sampling resolution, i.e. the distance between samples) yielded a satisfactory approximation, exhibiting an accuracy of approximately 2.73 mm. The theoretical error bound provides a reliable estimate of the error when the number of samples is 125 or less. Given the error/spacing ratio of 0.14, the optimal number of samples that can be accommodated is found to be 64, in conjunction with an allowable distance error of 2.73 mm. It can be concluded that, in order to satisfy the requisite level of accuracy in the execution, the larger number of training samples should be determined with some sacrifice in the burden of bigger data size.

The present paper furnished the mathematical framework necessary to obtain the ANN displacement bound by means of a simulation case study. The estimate can be used to select the number of training samples. Whilst the present study concentrates on a 3-DOF manipulator and a fixed ANN architecture for the purposes of clarity and tractability, it should be noted that the proposed mathematical framework is by no means limited to this configuration. The error bounding lemmas are architecture-independent and can be extended to deeper networks or alternative activation functions. Furthermore, the sampling efficiency metric proposed in this study is applicable to higher-DOF manipulators, where the dimensionality of the input space increases the importance of training data density. Subsequent research will investigate the applicability of this framework across various robotic platforms and learning architectures, with the objective of validating its broader relevance.

The mathematical approach is also required to calculate the sampling bound as a whole, incorporating the recursive nature of ANN training. While the present study provides a static bound on approximation error, the training process involves iterative weight updates aimed at minimizing a loss function. By modeling this recursive optimization process, it may be possible to derive a dynamic sampling bound that adapts to the convergence behavior of the network. With this dynamic bound, for models with 125 and more samples, better estimates can be obtained. In addition, the weights that correspond to the case study must be analyzed, with numerical values assigned to the given conditions.

\end{document}